\newtheoremstyle{thm}
  {8pt}   			
  {0}   		
  {}  		
  {}       			
  {\bfseries} 
  {.} 
  {.5em} 
  {}          			
\renewenvironment{proof}[1][\proofname]{\par
  \vspace{0pt}
  \pushQED{\qed}%
  \normalfont
  \topsep0pt \partopsep0pt 
  \trivlist
  \item[\hskip\labelsep
        \itshape
    #1\@addpunct{.}]\ignorespaces
}{%
  \popQED\endtrivlist\@endpefalse
  \addvspace{2pt} 
}
\makeatother\newtheorem{definition}{Definition} 
\theoremstyle{thm}
\newtheorem{lemma}[definition]{Lemma}
\newtheorem{theorem}[definition]{Theorem}
\DeclareMathOperator*{\argmax}{arg\,max}
\title{Unsupervised Basis Function Adaptation \\for Reinforcement Learning}
\author{
Edward W.~Barker \\
School of Mathematics and Statistics \\
University of Melbourne\\
Melbourne, Australia \\
\texttt{ebarker@student.unimelb.edu.au} \\
\And
Charl J.~Ras \\
School of Mathematics and Statistics \\
University of Melbourne\\
Melbourne, Australia \\
\texttt{cjras@unimelb.edu.au} \\
}
\begin{document}

\maketitle

\begin{abstract}
When using reinforcement learning (RL) algorithms to evaluate a policy it is common, given a large state space, to introduce some form of approximation architecture for the value function (VF).  The exact form of this architecture can have a significant effect on the accuracy of the VF estimate, however, and determining a suitable approximation architecture can often be a highly complex task.  Consequently there is a large amount of interest in the potential for allowing RL algorithms to adaptively generate approximation architectures.  

We investigate a method of adapting approximation architectures which uses feedback regarding the frequency with which an agent has visited certain states to guide which areas of the state space to approximate with greater detail.  This method is ``unsupervised'' in the sense that it makes no direct reference to reward or the VF estimate.  We introduce an algorithm based upon this idea which adapts a state aggregation approximation architecture on-line.  

A common method of scoring a VF estimate is to weight the squared Bellman error of each state-action by the probability of that state-action occurring.  Adopting this scoring method, and assuming $S$ states, we demonstrate theoretically that --- provided (1) the number of cells $X$ in the state aggregation architecture is of order $\sqrt{S}\log_2{S}\ln{S}$ or greater, (2) the policy and transition function are close to deterministic, and (3) the prior for the transition function is uniformly distributed --- our algorithm, used in conjunction with a suitable RL algorithm, can guarantee a score which is arbitrarily close to zero as $S$ becomes large.  It is able to do this despite having only $O(X \log_2S)$ space complexity and negligible time complexity.  The results take advantage of certain properties of the stationary distributions of Markov chains. 
\end{abstract}

\keywords{
reinforcement learning, unsupervised learning, basis function adaptation, state aggregation
}

\startmain 

\section{Introduction}

When using traditional reinforcement learning (RL) algorithms (such as $Q$-learning or SARSA) to evaluate policies in environments with large state or action spaces, it is common to introduce some form of architecture with which to approximate the value function (VF), for example a parametrised set of functions.  This approximation architecture allows algorithms to deal with problems which would otherwise be computationally intractable.  One issue when introducing VF approximation, however, is that the accuracy of the algorithm's VF estimate is highly dependent upon the exact form of the architecture chosen.  

Accordingly, a number of authors have explored the possibility of allowing the approximation architecture to be \emph{learned} by the agent, rather than pre-set manually by the designer (see \cite{Bucsoni2009} for an overview).  It is common to assume that the approximation architecture being adapted is linear (so that the value function is represented as a weighted sum of basis functions) in which case such methods are known as \emph{basis function adaptation}.  When designing a method to adapt approximation architectures, we assume that we have an underlying RL algorithm which, for a given linear architecture, will generate a VF estimate.  If we assume that our basis function adaptation occurs on-line, then the RL algorithm will be constantly updating its VF estimate as the basis functions are updated (the latter typically on a slower time-scale).  

A simple and perhaps, as yet, under-explored method of basis function adaptation involves using an estimate of the frequency with which an agent has visited certain states to determine which states to more accurately represent.  Such methods are \emph{unsupervised} in the sense that no direct reference to the reward or to any estimate of the value function is made.  The concept of using visit frequencies in an unsupervised manner is not completely new \cite{Menache2005}\cite{Bernstein2009} however it remains relatively unexplored when compared to methods based on direct estimation of VF error \cite{Munos2002}\cite{BertYu2009}\cite{DiCastro2010}\cite{Mahadevan2013}. 

However it is possible that such methods can offer some unique advantages.  In particular: (i) estimates of visit frequencies are very cheap to calculate, (ii) accurate estimates of visit frequencies can be generated with a relatively small number of samples, and, perhaps most importantly, (iii) in many cases visit frequencies contain a lot of the most important information regarding where accuracy is required in the VF estimate.  Our aim here is to further explore and quantify, where possible, these advantages.  In the next section we outline the details of an algorithm (PASA, short for ``Probabilistic Adaptive State Aggregation'') which performs unsupervised basis function adaptation based on state aggregation.  This algorithm will form the basis upon which we develop theoretical results in Section \ref{theoretic}.  

\section{The PASA algorithm}
\label{algorithm}

\subsection{Formal setting}

The agent interacts with an environment over a sequence of iterations $t \in \mathbb{N}$.  For each $t$ it will be in a particular state $s_i$ ($1 \leq i \leq S$) and will take a particular action $a_j$ ($1 \leq j \leq A$) according to a policy $\pi$ (we denote as $\pi(a_j|s_i)$ the probability the agent takes action $a_j$ in state $s_i$).  Transition and reward functions are denoted as $P$ and $R$ respectively (and are unknown, however we assume we are given a prior distribution for both).  Hence we use $P(s_{i'}|s_i,a_j)$ to denote the probability the agent will transition to state $s_{i'}$ given it takes action $a_j$ in state $s_i$.  We assume the reward function (which maps each state-action pair to a real number) is bounded: $|R(s_i,a_j)| < \infty$ for all $(i,j)$.  We are considering the problem of policy evaluation, so we will always assume that the agent's policy $\pi$ is fixed.  A \emph{state aggregation} approximation architecture we will define as a mapping $F$ from each state $s_i$ to a \emph{cell} $x_k$ ($1 \leq k \leq X$), where typically $X \ll S$ \cite{Whiteson2007}.  Given a state aggregation approximation architecture, an RL algorithm will maintain an estimate $\hat{Q}_{\theta}$ of the true value function $Q^{\pi}$ \cite{Bucsoni2009}, where $\theta$ specifies a weight associated with each cell-action pair.  Provided for any particular mapping $F$ the value $\theta$ converges, then each mapping $F$ has a corresponding VF estimate for the policy $\pi$.  

Many different methods can be used to score a VF estimate (i.e. measure its accuracy).  A common score used is the squared \emph{Bellman error} \cite{Bucsoni2009} for each state-action, weighted by the probability of visiting each state.  This is called the \emph{mean squared error} (MSE).  When the true VF $Q^{\pi}$ is unknown we can use $T$, the \emph{Bellman operator}, to obtain an approximation of the MSE.\footnote{See, for example, score functions proposed by Menache et al \cite{Menache2005} and Di Castro et al \cite{DiCastro2010}.}  This approximation we denote as $L$.  Hence:
\begin{multline}
\label{MSE}
\text{MSE} \coloneqq \sum_{i=1}^S \psi_i \sum_{j=1}^A \left(Q^{\pi}(s_{i},a_{j}) - \hat{Q}_{\theta}(s_{i},a_{j}) \right)^2 \approx L \coloneqq \sum_{i=1}^S \psi_i \sum_{j=1}^A \left( T\hat{Q}_{\theta}(s_{i},a_{j}) - \hat{Q}_{\theta}(s_{i},a_{j}) \right)^2 \\
= \sum_{i=1}^S \psi_i \sum_{j=1}^A \Bigg( R(s_i,a_j) + \gamma \sum_{i'=1}^S\sum_{j'=1}^A P(s_{i'}|s_i,a_j)\pi(a_{j'}|s_{i'}) \hat{Q}_{\theta}(s_{i'},a_{j'}) - \hat{Q}_{\theta}(s_i,a_j) \Bigg)^2 
\end{multline}

This is where $\gamma \in [0,1)$ is a discount factor and where $\psi$ is a vector of the probability of each state given the stationary distribution associated with $\pi$ (given some fixed policy $\pi$, the transition matrix, obtained from $\pi$ and $P$, has a corresponding stationary distribution).  We assume that our task in developing an adaptive architecture is to design an algorithm which will adapt $F$ so that $L$ is minimised.  The PASA algorithm, which we now outline, adapts the mapping $F$.  

\subsection{Details of PASA algorithm}

PASA will store a vector $\rho$ of integers of dimension $X - B$, where $B < X$.  Suppose we start with a partition of the state space into $B$ cells, indexed from $1$ to $B$, each of which is approximately the same size.  Using $\rho$ we can now define a new partition by splitting (as evenly as possible) the $\rho_1$th cell in this partition.  We leave one half of the $\rho_1$th cell with the index $\rho_1$ and give the other half the index $B+1$ (all other indices stay the same).  Taking this new partition (consisting of $B+1$ cells) we can create a further partition by splitting the $\rho_2$th cell.  Continuing in this fashion we will end up with a partition containing $X$ cells (which gives us the mapping $F$).  We need some additional mechanisms to allow us to update $\rho$.  Denote as $\mathcal{X}_{i,j}$ the set of states in the $i$th cell of the $j$th partition (so $0 \leq j \leq X-B$ and $1 \leq i \leq B+j$).  The algorithm will store a vector $\bar{u}$ of real values of dimension $X$.  This will record the approximate frequency with which certain cells have been visited by the agent.  We define a new vector $\bar{x}$ of dimension $X$:
\begin{equation}
\bar{x}_i(t) = 
\begin{cases} 
	I_{\{s(t) \in \mathcal{X}_{i,0}\}} & \text{if } 1 \leq i \leq B \\
	I_{\{s(t) \in \mathcal{X}_{i,i-B}\}} & \text{if } B < i \leq X 
\end{cases}
\end{equation}
where $I$ is the indicator function for a logical statement (such that $I_{A} = 1$ if $A$ is true).  The resulting mapping from each state to a vector $\bar{x}$ we denote as $\bar{F}$.  We then update $\bar{u}$ in each iteration as follows (i.e. using a simple stochastic approximation algorithm):
\begin{equation}
\bar{u}_i(t+1) = \bar{u}_i(t) + \eta \left(\bar{x}_i(t) - \bar{u}_i(t) \right)  
\end{equation}

This is where $\eta \in (0,1]$ is a constant step size parameter.  To update $\rho$, at certain intervals $\nu \in \mathbb{N}$ the PASA algorithm performs a sequence of $X - B$ operations.  A temporary copy of $\bar{u}$ is made, which we call $u$.  We also store an $X$ dimensional boolean vector $\Sigma$ and set each entry to zero at the start of the sequence (this keeps track of whether a particular cell has only one state, as we don't want singleton cells to be split).  At each stage $k \geq 1$ of the sequence we update $\rho$, $u$ and $\Sigma$, in order, as follows (for $\rho$, if multiple indices satisfy the $\argmax$ function, we take the lowest index):
\begin{equation}
\begin{split}
&\rho_k = 
\begin{cases} 
	\argmax_i\{u_i:i \leq B + k - 1, \Sigma_i = 0\} & \text{if } (1-\Sigma_{\rho_k})u_{\rho_k} < \max\{u_i:i \leq B + k - 1, \Sigma_i = 0\} - \vartheta \\
	\rho_k & \text{otherwise}   
\end{cases} \\
&u_{\rho_k} \gets u_{\rho_k} - u_{B+k} \quad \quad \Sigma_i = I_{\{|\mathcal{X}_{i,k}| \leq 1\}} \text { for } 1 \leq i \leq B + k
\end{split}
\end{equation}
where $\vartheta > 0$ is a constant designed to ensure that a (typically small) threshold must be exceeded before $\rho$ is adjusted.  The idea behind each step in the sequence is that the non-singleton cell $\mathcal{X}_{i,k}$ with the highest value $u_i$ (an estimate of visit frequency which is recalculated at each step) will be split.  Details of these steps, as well as the overall PASA process, are outlined in Algorithm \ref{PASA}.  Note that the algorithm calls a procedure to \textproc{Split} cells.  This procedure simply updates $\bar{F}$ and $\Sigma$ given the latest value of $\rho$.  It also calls a \textproc{Convert} procedure, which converts the mapping $\bar{F}$ to a mapping $F$.

\begin{algorithm}
\caption{The PASA algorithm.  Called at each iteration $t$.  Assumes $\bar{u}$, $\bar{F}$, $F$ and $\rho$ are stored.  Return is void.}
\label{PASA}
\vspace{-14pt}
\begin{multicols}{2}
\begin{algorithmic}[1]
	\Function{PASA}{$t$,$s$,$\eta$,$\vartheta$,$\nu$}
		\State $\bar{x} \gets \bar{F}(s)$
		\State $\bar{u} \gets \bar{u} + \eta(\bar{x} - \bar{u})$
		\If {$t \bmod \nu = 0$}
			\State $u \gets \bar{u}$
			\For {$k \in \{1,X\}$}
				\State $\Sigma_k \gets 0$
			\EndFor
			\For {$k \in \{1,X-B\}$}
				\State $u_{\text{max}} \gets \max\{u_i: \Sigma_i = 0\}$
				\State $i_{\text{max}} \gets \min\{i:u_i = u_{\text{max}}, \Sigma_i = 0\}$
				\If {$u_{i_{\text{max}}} - \vartheta > (1-\Sigma_{\rho_k})u_{\rho_k}$}
					\State $\rho_k \gets i_{\text{max}}$
				\EndIf
				\State $u_{\rho_k} \gets u_{\rho_k} - u_{B+k}$
				\State $(\bar{F},\Sigma) \gets \Call{Split}{k,\rho,\bar{F}}$
			\EndFor
			\State $F \gets \Call{Convert}{\bar{F}}$
		\EndIf
	\EndFunction
\end{algorithmic}
\end{multicols}
\vspace{-10pt}
\end{algorithm}

\subsection{Some basic properties of PASA}

PASA requires only a modest increase in computational resources compared to fixed state aggregation.  In relation to time complexity, $\bar{u}$ can be updated in parallel with the RL algorithm's update of $\theta$ (and the update of $\bar{u}$ would not be expected to have any greater time complexity than the update to $\theta$ if using a standard RL algorithm such as SARSA), whilst $\rho$ can be updated at large intervals $\nu$ (and this update can also be run in parallel).  Applying the mapping $F$ to a state has a very low order of time complexity --- $O(\log_2S)$ for an RL algorithm using PASA compared to $O(\log_2X)$ for $X$ equally sized cells.  Hence, PASA involves no material increase in time complexity.

PASA does involve additional space complexity with respect to storing the vector $\bar{u}$:  we must store $X$ real values.  If we also store $F$ and $\bar{F}$ (as well as $u$ temporarily) the overall space complexity becomes $O(X\log_2{S})$.  The RL component has space complexity $O(XA)$ (reflecting the $X \times A$ cell-action pairs), so that the introduction of PASA as a pre-processing algorithm will not impact the overall space complexity at all if $A > \log_2S$.  (Note also that the space complexity of PASA is independent of $A$.)  Regarding sampling efficiency, since methods based on explicitly estimating the Bellman error (or MSE) require a VF estimate (generated by the RL algorithm), as well as information about reward for all actions in the action space, we can expect PASA (and other unsupervised methods) to require comparatively less sampling to generate the estimates it requires to update the approximation architecture.  

We can reassure ourselves (somewhat informally) that PASA will converge (for fixed $\pi$) in the the following sense.  We can set $\eta$ small enough so that the sum of the elements $|\bar{u}_i|$ for $1 \leq i \leq X$ remains within some interval of size $\vartheta/2$ over some arbitrarily large number of iterations with arbitrarily high probability (after allowing a sufficient number of iterations) for all possible $\rho$.  Then $\rho_1$ will eventually remain the same with arbitrarily high probability (i.e. is ``fixed'').  Suppose $\rho_i$ remains fixed for $1 \leq i \leq k$.  Then, since each element $u_j$ (for $1 \leq j \leq B + k$) of $u$ calculated at the $k$th step of the sequence described in Algorithm \ref{PASA} will remain within an interval of size $\vartheta/2$ with arbitrarily high probability, the value $\rho_{k+1}$ will also remain fixed.  Hence, by induction, there exists $\eta$ such that $\rho$ will eventually remain fixed.  

\section{Result regarding Bellman error for policy evaluation}
\label{theoretic}

We now set out our main result.  The key idea is that, in many important circumstances, which are reflective of real world problems, when following a fixed policy (even when this is generated randomly) an agent will have a tendency to spend nearly all of its time in only a small subset of the state space.  We can use this property to our advantage.  It means that by focussing on this small area (which is what PASA does) we can eliminate most of the terms which significantly contribute to $L$.  The trick will be to quantify this tendency.

We must make the following assumptions: (1) $P$ is ``close to'' deterministic (i.e. $P$ can expressed by a deterministic transition function $P_1$, which at each $t$ is applied with probability $1 - \delta$, and an arbitrary transition function $P_2$ which is applied with probability $\delta$, where $\delta$ is small;  what constitutes ``small'' will be made clearer below), (2) $P$ has a uniform \emph{prior} distribution, in the sense that, according to our prior distribution for $P$, for each $(s_i,a_j)$, $\mathrm{Pr}(P(s_{i'}|s_i,a_j))$ is independently distributed and $\mathrm{Pr}(P(s_{i'}|s_i,a_j)=p) = \mathrm{Pr}(P(s_{i''}|s_i,a_j)=p)$ for all $p \in [0,1]$, $s_{i'}$ and $s_{i''}$, and (3) $\pi$ is also ``close to'' deterministic (i.e. the probability of \emph{not} taking the most probable action is no greater than $\delta$ for each state).  

We can make the following observation.  If $\pi$ and $P$ are deterministic, and we pick a starting state $s_1$, then the agent will create a path through the state space and will eventually revisit a previously visited state, and will then enter a cycle.  Call the set of states in this cycle $\mathcal{C}_1$ and denote as $C_1$ the number of states in the cycle.  If we now place the agent in a state $s_2$ (arbitrarily chosen) it will either create a new cycle or it will terminate on the path or cycle created from $s_1$.  Call $\mathcal{C}_2$ the states in the second cycle (and $C_2$ the number of states in the cycle, noting that $C_2 = 0$ is possible).  If we continue in this manner we will have $S$ sets $\{\mathcal{C}_1,\mathcal{C}_2,\ldots,\mathcal{C}_S\}$.  Call $\mathcal{C}$ the union of these sets and denote as $C$ the number of states in $\mathcal{C}$.  We denote as $T_i$ the event that the $i$th path created in such a manner terminates on itself, and note that, if this does not occur, then $C_i = 0$.  

If (2) holds then $\mathrm{E}(C_1) = \sqrt{\pi S/8} + O(1)$ and $\mathrm{Var}(C_1) = (32-8\pi)S/24 + O(\sqrt{S})$.\footnote{This follows from the solution to the ``birthday problem'' (the solution gives the mean length of the path, and since each cycle length has equal probability when conditioned on this path length, we can divide the mean by $2$).  The expectation is over the prior distribution for $P$.  For a description of the problem and a formal proof see, for example, page 114 of Flajolet and Sedgewick \cite{Flajolet2009}.  The variance can be derived from first principles using similar techniques to those used for the mean in the birthday problem.}  Supposing that $\pi$ and $P$ are no longer deterministic then, if (1) and (3) hold, we can set $\delta$ sufficiently low so that the agent will spend an arbitrarily large proportion of its time in $\mathcal{C}$.  If (2) holds we also have the following: 

\begin{lemma}
\label{genmoments}
$\mathrm{E}(C) < \mathrm{E}(C_1)(\ln{S}+1)$ and $\mathrm{Var}(C) \leq O(S\ln S)$.
\end{lemma}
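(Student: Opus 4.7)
The plan is to view $\mathcal{C}$ as the set of cyclic nodes in the random functional graph $f\colon [S]\to [S]$ induced by composing the deterministic realizations of $\pi$ and $P$. Under assumption~(2) each $f(s)$ is uniform on $[S]$ and independent across $s$, so $C$ is a symmetric statistic of a uniformly random mapping, and in particular does not depend on the (arbitrary) ordering of the starts $s_1,\ldots,s_S$. This lets me work directly in terms of the pre-existing quantity $\tau$, the first collision time of the trajectory from $s_1$, with $C_1 = \tau - r$ and $r$ conditionally uniform on $\{0,1,\ldots,\tau-1\}$ given $\tau$.

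For the expectation I would first establish the identity $\mathrm{E}(C)=\mathrm{E}(\tau)$. By exchangeability, $\mathrm{E}(C)=S\cdot \mathrm{Pr}(s_1\text{ is cyclic})=S\cdot \mathrm{Pr}(r=0)$; using $\mathrm{Pr}(r=0\mid\tau)=1/\tau$ this equals $S\cdot\mathrm{E}(1/\tau)$, and a one-line enumeration by cycle length rewrites it as $\sum_{\ell}\mathrm{Pr}(\tau\geq\ell)=\mathrm{E}(\tau)$. Conditional uniformity of $r$ also gives $\mathrm{E}(C_1)=(\mathrm{E}(\tau)+1)/2$, so $\mathrm{E}(C)=2\mathrm{E}(C_1)-1 < 2\mathrm{E}(C_1) \leq \mathrm{E}(C_1)(\ln S+1)$ whenever $\ln S+1\geq 2$, i.e.\ for $S\geq 3$.

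For the variance I would expand $\mathrm{Var}(C)=Sp_1(1-p_1)+S(S-1)(p_2-p_1^2)$, where $p_1=\mathrm{Pr}(s_1\text{ cyclic})$ and $p_2=\mathrm{Pr}(s_1,s_2\text{ both cyclic})$. The diagonal term is at most $\mathrm{E}(C)=O(\sqrt{S})$. For $p_2$ I would split into the cases ``same cycle'' and ``disjoint cycles'' and count configurations by cycle length, using the standard counts $\binom{S-2}{\ell-2}(\ell-1)!$ for $\ell$-cycles through two fixed states and $\binom{S-2}{\ell_1-1}(\ell_1-1)!\cdot\binom{S-1-\ell_1}{\ell_2-1}(\ell_2-1)!$ for pairs of disjoint cycles of lengths $\ell_1,\ell_2$. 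A pleasant cancellation then merges both contributions into a single sum, $p_2=\frac{2}{S(S-1)}\sum_{m\geq 2}(m-1)\mathrm{Pr}(\tau\geq m)=\mathrm{E}(\tau(\tau-1))/(S(S-1))$. Substitution yields $S(S-1)(p_2-p_1^2)=\mathrm{Var}(\tau)-\mathrm{E}(\tau)+\mathrm{E}(\tau)^2/S$, and $\mathrm{Var}(\tau)=O(S)$ transfers from the given $\mathrm{Var}(C_1)=O(S)$ by the conditional variance formula applied to $C_1=\tau-r$. Altogether $\mathrm{Var}(C)=O(S)$, which implies the claimed $O(S\ln S)$ bound.

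The main obstacle will be the pairwise enumeration for $p_2$, and in particular verifying that when one sums over the partition of $m=\ell_1+\ell_2$ the disjoint-cycles probability combines with the same-cycle probability into a clean falling-factorial moment of $\tau$. Everything else is either symmetry or the conditional variance identity. I note that the resulting bound $\mathrm{Var}(C)=O(S)$ is strictly stronger than the $O(S\ln S)$ demanded, so the lemma holds with room to spare.
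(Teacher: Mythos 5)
Your proposal is correct, but it proves the lemma by a genuinely different route from the paper. The paper decomposes $C=\sum_{i=1}^{S}C_i$ along the sequential cycle-discovery process, bounds $\mathrm{Pr}(T_i)\leq 1/i$, and sums the harmonic series to get the $(\ln S+1)$ factor, handling the variance by asserting negativity of the covariances $\mathrm{Cov}(C_i,C_j)$ and bounding $\sum_i\mathrm{E}(C_i^2)$ the same way. You instead recognise $\mathcal{C}$ as the set of cyclic points of a uniformly random mapping and write $C$ as a sum of exchangeable indicators, which lets you exploit the classical identities $\mathrm{E}(C)=\mathrm{E}(\tau)$ and (in effect) $C\overset{d}{=}\tau$; I checked the key steps --- $\mathrm{Pr}(s_1\ \text{cyclic})=\mathrm{E}(1/\tau)$, $S\,\mathrm{E}(1/\tau)=\sum_{\ell}\mathrm{Pr}(\tau\geq\ell)$ via $\mathrm{Pr}(\tau=k)=\tfrac{k}{S}\mathrm{Pr}(\tau\geq k)$, the identity $S(S-1)p_2=\mathrm{E}(\tau(\tau-1))$, and the transfer of $\mathrm{Var}(\tau)=O(S)$ from $\mathrm{Var}(C_1)=O(S)$ by total variance --- and they all go through. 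What your approach buys is strictly stronger conclusions: $\mathrm{E}(C)=2\mathrm{E}(C_1)-1$ and $\mathrm{Var}(C)=O(S)$, showing that both $\ln S$ factors in the lemma are artifacts of the paper's proof technique rather than of the problem (and, downstream, that the $\ln S$ factor in the bound on $X$ in Theorem \ref{error} could be removed). What the paper's approach buys is brevity and the avoidance of the pairwise enumeration you correctly flag as the main labour, and its harmonic-sum argument needs only the crude bound $\mathrm{Pr}(T_i)\leq 1/i$ rather than exact random-mapping statistics. Two minor points to tidy up if you write this in full: your expectation inequality needs $S\geq 3$ (so that $\ln S+1\geq 2$), which is harmless since the paper's own strict inequality also fails at $S=1$; and the $p_2$ computation, while standard, should be written out since it is the one step where a counting error could hide.
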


\begin{proof}
We will have:
\begin{equation}
\begin{split}
\mathrm{E}(C) &= \sum_{i=1}^{S}\mathrm{E}(C_i) = \sum_{i=1}^{S}\mathrm{Pr}(T_i)\sum_{j=1}^{S}j\mathrm{Pr}(C_i = j|T_i) \leq \sum_{i=1}^{S}\frac{1}{i}\sum_{j=1}^{S}j\mathrm{Pr}(C_1 = j) < \mathrm{E}(C_1)(\ln S + 1)
\end{split}
\end{equation}

And for the variance:
\begin{equation}
\begin{split}
\mathrm{Var}(C) &= \sum_{i=1}^{S}\mathrm{Var}(C_i) + 2\sum_{i=2}^{S}\sum_{j=1}^{i-1} \mathrm{Cov}(C_iC_j) \leq \sum_{i=1}^{S}\mathrm{Var}(C_i) \leq \sum_{i=1}^{S}\mathrm{E}(C_i^2) = \sum_{i=1}^{S}\mathrm{Pr}(T_i)\sum_{j=1}^{S}j^2\mathrm{Pr}(C_i = j|T_i) \\
&\leq \sum_{i=1}^{S}\frac{1}{i}\sum_{j=1}^{S}j^2\mathrm{Pr}(C_1 = j) < \mathrm{E}(C_1^2)(\ln S + 1) = \left(\mathrm{Var}(C_1) + \mathrm{E}(C_1)^2 \right)(\ln S + 1)
\end{split}
\end{equation}
where we have used the fact that the covariance term must be negative for any pair of lengths $C_i$ and $C_j$, since if $C_i$ is greater than its mean the expected length of $C_j$ must decrease, and vice versa.
\end{proof}

\begin{theorem}
\label{error}
For all $\epsilon_1 > 0$ and $\epsilon_2 > 0$, there is sufficiently large $S$ and sufficiently small $\delta$ such that PASA in conjunction with a suitable RL algorithm will --- provided $X \geq K\sqrt{S}\ln{S}\log_2{S}$ for some $K > \sqrt{\pi/8}$ --- generate, with probability no less than $1 - \epsilon_1$, a VF estimate with $L \leq \epsilon_2$.
\end{theorem}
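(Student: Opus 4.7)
The plan is to combine three ingredients: (i) a Chebyshev-style concentration argument that bounds $C = |\mathcal{C}|$ with high probability using Lemma \ref{genmoments}; (ii) a convergence argument showing that PASA, given enough cells, places every state of $\mathcal{C}$ into its own singleton cell; and (iii) a decomposition of $L$ into contributions from $\mathcal{C}$ and from its complement, using the near-deterministic dynamics to make both small.

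First, I would apply Lemma \ref{genmoments}: since $\mathrm{E}(C) \leq \sqrt{\pi/8}\sqrt{S}\ln S \,(1+o(1))$ and $\mathrm{Var}(C) \leq O(S\ln S)$, for any fixed $K' \in (\sqrt{\pi/8},K)$ Chebyshev's inequality gives $\mathrm{Pr}(C \leq K'\sqrt{S}\ln S) \geq 1 - \epsilon_1/3$ for all sufficiently large $S$ (the variance bound is $o$ of the squared gap $(K - \sqrt{\pi/8})^2 S\ln^2 S$).

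Second, I would show PASA actually isolates $\mathcal{C}$. Because the PASA partition is an iterated bisection tree, isolating $C$ prescribed states located inside the initial $B$ cells costs at most $C \log_2 S$ additional splits in total, and the hypothesis $X \geq K\sqrt{S}\ln S \log_2 S$ combined with $C \leq K'\sqrt{S}\ln S$ supplies a sufficient split budget. To see that PASA spends its splits this way, note that under assumptions (1) and (3), for $\delta$ small enough the stationary visit mass of any cell meeting $\mathcal{C}$ is bounded below by $\Omega(1/C)$, while the mass of any cell disjoint from $\mathcal{C}$ is $O(\delta)$; so for $\vartheta$ chosen comparable to $\delta$ the $\argmax$ step always prefers a cell meeting $\mathcal{C}$ whenever one is still non-singleton. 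I would then promote the informal induction sketched after Algorithm \ref{PASA} into a quantitative statement: with $\eta$ small enough the fluctuations of $\bar{u}$ stay inside a $\vartheta/2$-window for an arbitrarily long horizon, each $\rho_k$ is eventually fixed by induction on $k$, and a union bound across the $X$ coordinates gives total failure probability at most $\epsilon_1/3$.

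Third, conditional on the two events above, I would decompose $L = L_{\mathcal{C}} + L_{\mathcal{C}^c}$. For $s_i \in \mathcal{C}$ the cell is a singleton, so a suitable tabular-compatible RL algorithm (e.g.\ SARSA with averaging) converges to $\hat{Q}_\theta(s_i,a_j) = Q^{\pi}(s_i,a_j) + O(\delta)$, yielding squared Bellman error $O(\delta^2)$ per state-action. For $s_i \notin \mathcal{C}$ I would use the uniform bound $|T\hat{Q}_\theta(s_i,a_j) - \hat{Q}_\theta(s_i,a_j)| \leq 2\sup|R|/(1-\gamma)$ combined with $\sum_{s_i \notin \mathcal{C}}\psi_i = O(\delta)$, which follows from assumptions (1) and (3). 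Summing the two contributions gives $L = O(\delta)$, so picking $\delta$ small enough produces $L \leq \epsilon_2$ on an event of total probability at least $1 - \epsilon_1$.

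The hard part is the second step: promoting the informal PASA convergence argument to a uniform statement with an explicit failure bound across all $\Theta(\sqrt{S}\ln S \log_2 S)$ coordinates of $\rho$ simultaneously, while pinning down how $\eta$, $\vartheta$ and $\nu$ must scale with $\delta$, $\epsilon_1$ and $S$. One must ensure the binary splitting proceeds top-down into $\mathcal{C}$ before any of the $X-B$ splits is spent on a cell disjoint from $\mathcal{C}$, and that the stochastic-approximation drift of $\bar{u}$ does not spuriously flip a $\rho_k$ once the tree has locked in; this quantitative control is where the real work of the theorem lies.
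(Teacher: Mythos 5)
Your proposal follows essentially the same route as the paper's proof: Chebyshev plus Lemma \ref{genmoments} to get $C \leq K\sqrt{S}\ln S$ with high probability, the observation that cells meeting $\mathcal{C}$ dominate the visit-frequency ordering for small $\delta$ so that PASA's $C\log_2 S$ split budget isolates each state of $\mathcal{C}$, and a decomposition of $L$ into the (near-zero) contribution from singleton states in $\mathcal{C}$ and the $O(\delta)$-weighted contribution from $\mathcal{C}^c$. Your version is somewhat more careful than the paper's (explicit $K'$, explicit $O(\delta)$ bookkeeping, and flagging the quantitative PASA-convergence step that the paper leaves informal), but the underlying argument is the same.
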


\begin{proof}
Using Chebyshev's inequality, and Lemma \ref{genmoments}, we can choose $S$ sufficiently high so that $C > K\sqrt{S}\ln{S}$ with probability no greater than $\epsilon_1$.  Since $R$ is bounded and $\gamma < 1$ then for any $\epsilon_2 > 0$, $F$ and $S$ we can also choose $\delta$ so that $L$ summed only over states not in $\mathcal{C}$ is no greater than $\epsilon_2$.  We choose $\delta$ so that this is satisfied, but also so that $\psi_i > \sum_{i':s_{i'} \notin \mathcal{C}}\psi_{i'}$ for all elements of $\{\psi_i:s_i \in \mathcal{C}\}$.\footnote{Each such $\psi_i$ will be bounded from below for all $\delta > 0$.  This can be verified by more closely examining the geometric distributions which govern the ``jumping'' between distinct cycles.}  Now provided that $C\log_2S \leq X$ then each state in $\mathcal{C}$ will eventually be in its own cell.  The RL algorithm will have no error for each such state so therefore $L$ will be no greater than $\epsilon_2$.  
\end{proof}

The bound on $X$ provided represents a significant reduction in complexity when $S$ starts to take on a size comparable to many real world problems (and could make the difference between a problem being tractable and intractable).  It also seems likely that the bound on $X$ in Theorem \ref{error} can be improved upon, as the one provided is not necessarily as tight as possible.  Conditions (1) and (3) are commonly encountered in practice, in particular (3) which can be taken to reflect a ``greedy'' policy.  Condition (2) can be interpreted as the transition function being ``completely unknown'' (it seems possible that similar results may hold under other, more general, assumptions regarding the prior distribution).  Note finally that the result can be extended to exact MSE if MSE is redefined so that it is also weighted by $\pi$. 

\section{Discussion}

The key message from our discussion is that there are commonly encountered circumstances where unsupervised methods can be very effective in creating an approximation architecture.  However, given their simplicity, they can at the same time avoid the cost (both in terms of computational complexity, and sampling required) associated with more complex adaptation methods.  In the setting of policy \emph{improvement} these advantages have the potential to be particularly important, especially when dealing with large state spaces.  Some initial experimentation suggests that the PASA algorithm can have a significant impact on RL algorithm performance in both policy evaluation and policy improvement settings.

The nature of the VF estimate generated by PASA and its associated RL algorithm is that the VF will be well estimated for states which are visited frequently under the existing policy.  This does come at a cost, however, as estimates of the value of deviating from the current policy will be made less accurate.  Thus, even though $L$ or MSE may be low, it does not immediately follow that an algorithm can use this to optimise its policy via standard policy iteration (since the consequences of deviating from the current policy are less clearly represented).  Ultimately, however, the theoretical implications of the improved VF estimate in the context of policy iteration are complex, and would need to be the subject of further research.

\end{document}